\newtheorem{lemma}{Lemma}
\newtheorem{definition}{Definition}
\newtheorem{proposition}{Proposition}
\begin{document}

\title{Local Properties of Neural Networks Through the Lens of Layer-wise Hessians}

\author{Maxim~Bolshim~and~Alexander~Kugaevskikh%
\thanks{M. Bolshim and A. Kugaevskikh are with ITMO University, Saint Petersburg, Russia.
E-mail: maxim.bolshim@yandex.ru, a-kugaevskikh@yandex.ru}%
\thanks{Manuscript received November 6, 2025. This work received no specific grant from any funding agency in the public, commercial, or not-for-profit sectors.}%
\thanks{Author Contributions: M. Bolshim contributed to conceptualization, methodology, software, validation, formal analysis, investigation, data curation, writing of the original draft, and visualization. A. Kugaevskikh contributed to supervision, validation, resources, writing review and editing, and project administration.}%
\thanks{AI-Assisted Tools: Claude LLM was used solely for stylistic editing and formatting of the manuscript text. All intellectual content, scientific methodology, experimental work, analysis, and conclusions are original contributions of the authors.}%
}

\markboth{IEEE Transactions on Neural Networks and Learning Systems}%
{Bolshim \MakeLowercase{\textit{et al.}}: Local Properties of Neural Networks Through Layer-wise Hessians}

\maketitle

\begin{abstract}
This paper introduces a novel methodology for analyzing neural networks through the lens of layer-wise
Hessian matrices. We formalize the concept of the local Hessian for individual functional blocks
(layers) of a neural network and demonstrate its utility for characterizing the geometry of the parameter
space. The spectral properties of local Hessians are shown to provide quantitative insights into phenomena
such as overfitting, underparameterization, and the expressivity of neural architectures. We conduct a
comprehensive empirical study across 111 experiments on 37 datasets, revealing consistent patterns in the
evolution and structure of local Hessians during training. These findings establish a foundation for
principled diagnostics and informed design of neural network architectures based on their local geometric properties.
The proposed approach enables early detection of architectural problems and provides quantitative metrics
for optimization quality assessment. Our analysis demonstrates that spectral characteristics of local Hessians
correlate strongly with generalization capability and can serve as indicators of optimal parameterization.
\end{abstract}

\begin{IEEEkeywords}
Neural networks, Hessian matrices, spectral analysis, deep learning, network optimization.
\end{IEEEkeywords}

\IEEEpeerreviewmaketitle

\section{Introduction}
\IEEEPARstart{D}{eep} neural networks have demonstrated outstanding results in many fields, including computer vision, natural
language processing, and other machine learning tasks \cite{lecun2015deep, goodfellow2016deep}. However,
despite their practical success, the question of why certain architectures outperform others and how to
formally and systematically improve neural network design remains open. The empirical approach, based on
trial and error, is becoming increasingly costly as model sizes and data volumes grow.

A number of researches have demonstrated that analyzing the curvature of the loss landscape via Hessians and
related spectral tools can shed light on the training dynamics and generalization ability of neural networks
\cite{sagun2017empirical, maheswaranathan2019universality, lee2019wide, arora2019finegrained}. In this work,
we put forward the thesis that local properties of the parameter space of a neural network can provide early
insights into the internal characteristics of the model, even at the initial stages of training
\cite{poole2016transient}. Specifically, we propose to utilize local Hessian matrices-matrices of second
derivatives of the objective function with respect to the parameters of individual layers-to analyze the
geometry of the parameter space.

The concept of the local Hessian enables a formal and quantitative characterization of the geometric
properties of the parameter space in the vicinity of an optimization point. In particular, we demonstrate
that the spectrum of the local Hessian-such as the distribution and structure of its eigenvalues
\cite{ghorbani2019investigation}-is closely related to the functional properties of the corresponding layers
of a neural network.

\subsection{Contributions}

The main contributions of this work are as follows:

\begin{itemize}
  \item The introduction of a mathematically rigorous definition of the local Hessian for functional blocks
    of a neural network and an efficient computational algorithm for its calculation.
  \item A detailed analysis of the spectral properties of local Hessians during neural network training
    across 111 experiments involving various architectures and 37 datasets.
  \item An investigation of the geometric interpretation of the neural network parameter space through the
    lens of local Hessians, revealing connections between spectral properties and model performance.
  \item Practical guidelines for architecture optimization based on spectral analysis of local Hessians,
    including detection of overfitting, underparameterization, and suboptimal parameter distribution.
\end{itemize}

The results obtained not only deepen our theoretical understanding of deep neural networks, but also open new
perspectives for studying their internal dynamics and provide actionable insights for practitioners.

\subsection{Paper Organization}

The remainder of this paper is organized as follows. Section~\ref{sec:framework} establishes the mathematical
framework and formal definitions of neural networks, functional blocks, and local Hessians. Section~\ref{sec:localhessian}
introduces the concept of local layer-wise Hessians, derives their properties, and presents an efficient
computational algorithm. Section~\ref{sec:methodology} describes the experimental methodology, including
dataset selection, architectural variations, and data collection procedures. Section~\ref{sec:results}
presents comprehensive empirical results from 111 experiments, analyzing spectral properties and their
correlation with model performance. Section~\ref{sec:discussion} discusses the implications of our findings
for understanding neural network behavior. Finally, Section~\ref{sec:conclusion} concludes the paper and
outlines directions for future research.

\section{Mathematical Framework of Neural Networks}
\label{sec:framework}

This section establishes the formal mathematical foundations required for our analysis of local Hessians
in neural networks. We provide rigorous definitions of neural networks, their functional blocks, and
the intermediate representations that arise during forward propagation.

\subsection{Definition and Structure of a Neural Network}

\begin{definition}
  A neural network $\mathcal{F}: \mathbb{R}^d \rightarrow \mathbb{R}^m$ is a parameterized function with
  parameters $\theta \in \mathbb{R}^P$, mapping input data $x \in \mathbb{R}^d$ to an output space via a
  sequence of functional transformations. We denote by $\mathcal{F}(x; \theta)$ the result of applying the
  network to input $x$ with given parameters $\theta$.
\end{definition}

\begin{definition}
  A functional block (layer) $C_i$ of a neural network $\mathcal{F}$ is defined as a pair of modules $(P_i,
  A_i)$, where:
  \begin{itemize}
    \item $P_i: \mathbb{R}^{d_i} \times \mathbb{R}^{p_i} \rightarrow \mathbb{R}^{q_i}$ is a parameterized
      transformation with parameters $\theta_i \in \mathbb{R}^{p_i}$.
    \item $A_i: \mathbb{R}^{q_i} \rightarrow \mathbb{R}^{q_i}$ is an activation function (potentially the identity).
  \end{itemize}
\end{definition}

\begin{definition}
  A neural network $\mathcal{F}$ can be represented as a composition of $n$ functional blocks:
  \begin{equation}
    \mathcal{F}(x; \theta) = (C_n \circ C_{n-1} \circ \ldots \circ C_1)(x),
  \end{equation}
  where $C_i(z) = A_i(P_i(z; \theta_i))$ for input $z$, and $\theta = \{\theta_1, \theta_2, \ldots,
  \theta_n\}$ is the complete set of network parameters.
\end{definition}

This representation of a neural network enables the analysis of each functional block independently, which is
essential for local analysis of network properties. Decomposing a complex model into simpler components is a
key methodological approach that allows the application of spectral analysis tools to individual components.

\subsection{Intermediate Representations and Activation Functions}

\begin{definition}
  The intermediate representation $z_i$ is defined as the input to block $C_i$:
  \begin{equation}
    z_i =
    \begin{cases}
      x, & \text{if } i = 1 \\
      (C_{i-1} \circ \ldots \circ C_1)(x), & \text{if } i > 1
    \end{cases}
  \end{equation}

  Accordingly, the output of block $C_i$ is denoted as:
  \begin{equation}
    y_i = C_i(z_i) = A_i(P_i(z_i; \theta_i))
  \end{equation}
\end{definition}

Intermediate representations play a crucial role in neural network analysis, as they capture how the input
signal is transformed at each processing stage. Of particular interest is the study of the geometry of these
intermediate representations and their relationship to the parameters of the corresponding layers.

\begin{definition}
  For block $C_i$, we define the local scalar function $S_i: \mathbb{R}^{p_i} \rightarrow \mathbb{R}$ as:
  \begin{equation}
    S_i(\theta_i) = \varphi(A_i(P_i(z_i; \theta_i))),
  \end{equation}
  where $\varphi: \mathbb{R}^{q_i} \rightarrow \mathbb{R}$ is an aggregation function, typically $\varphi(y)
  = \sum_{j=1}^{q_i} y_j$.
\end{definition}

The scalar function provides a means to assess the influence of the parameters of a specific layer on its
output for a fixed input. This function is central for defining the local Hessian in the following section.

\subsection{Typical Implementations in Neural Networks}

In modern neural networks, the following component implementations are common:
\begin{itemize}
  \item $P_i$ - linear transformation $P_i(z_i; \theta_i) = W_i z_i + b_i$, where $\theta_i = \{W_i, b_i\}$
  \item $A_i$ - nonlinear activation function, e.g., ReLU, Sigmoid, or Tanh
  \item $\varphi(y_i) = \sum_{j=1}^{q_i} y_{i,j}$ - summation of all components of the output vector
\end{itemize}

These definitions and notations will be used throughout the remainder of this work to ensure mathematical
rigor and consistency.

\section{Local Layer-wise Hessians in Neural Networks}
\label{sec:localhessian}

This section introduces the central concept of this work: the local Hessian matrix for individual layers
of a neural network. We provide a formal definition, discuss its geometric interpretation, and present
an efficient computational algorithm suitable for large-scale models.

\subsection{Definition of the Local Hessian}

\begin{definition}
  The local Hessian matrix $H_i \in \mathbb{R}^{p_i \times p_i}$ (hereafter, local Hessian, $LH_i$) for block
  $C_i$ is defined as the matrix of second derivatives of the scalar function $S_i$ with respect to the
  parameters $\theta_i$:
  \begin{equation}
    H_i = \nabla_{\theta_i}^2 S_i(\theta_i) = \left[ \frac{\partial^2 S_i(\theta_i)}{\partial \theta_{i,j}
    \partial \theta_{i,k}} \right]_{j,k=1}^{p_i}
  \end{equation}
\end{definition}

Analysis of this matrix provides insights into:

\begin{itemize}[itemsep=0pt, topsep=0pt]
  \item The degree of nonlinearity of the transformation performed by the layer
  \item Interdependencies among parameters and their influence on the layer output
  \item Geometric properties of the parameter space
  \item Sensitivity of the layer to small parameter perturbations
\end{itemize}

In differential geometry, the Hessian of a function at a point defines a quadratic form that approximates the
curvature of the function's level surface. In the context of neural networks, $LH_i$ characterizes the
curvature of the functional response of a layer in its parameter space~\cite{dangel2019modular}. When
analyzing the surface, the sign and distribution of the eigenvalues of $LH_i$ are crucial, as they determine
the local geometry.

\subsection{Efficient Computation of Local Hessians}

The analysis of neural networks using $LH_i$ is a powerful tool; however, due to the quadratic scaling of the
Hessian matrix size with respect to the number of parameters, its direct computation is often computationally
infeasible for modern architectures. Consequently, various approximation methods for $LH_i$ are widely used
in practice~\cite{carlon2024, hare2024, martens2010, nocedal1980}. This section presents a methodology for
working with $LH_i$ that addresses these computational challenges.

For efficient computation of $LH_i$, we propose an algorithm based on the sequential calculation of the matrix rows.

\begin{lemma}
  The elements of the Hessian matrix $H_i$ can be computed sequentially by rows:
  \begin{equation}
    \begin{aligned}
      g_i &= \nabla_{\theta_i} S_i(\theta_i) = \left[ \frac{\partial S_i}{\partial \theta_{i,j}} \right]_{j=1}^{p_i} \\
      H_i[j,:] &= \nabla_{\theta_i} g_{i,j} = \nabla_{\theta_i} \left( \frac{\partial S_i}{\partial
      \theta_{i,j}} \right)
    \end{aligned}
  \end{equation}
\end{lemma}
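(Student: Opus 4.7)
The plan is to unpack both sides of the claimed equality entry by entry and reduce the statement to Schwarz's theorem on the symmetry of mixed second partial derivatives. First I would fix an index $j\in\{1,\dots,p_i\}$ and compute the $k$-th entry of $\nabla_{\theta_i} g_{i,j}$ directly from the definition of the gradient: by the chain rule applied to the scalar-valued function $g_{i,j}(\theta_i)=\partial S_i/\partial\theta_{i,j}$, this entry equals the mixed partial $\partial^2 S_i/(\partial\theta_{i,k}\,\partial\theta_{i,j})$. On the other hand, the $(j,k)$-entry of $H_i$ is, by the definition given above, the mixed partial in the opposite order, $\partial^2 S_i/(\partial\theta_{i,j}\,\partial\theta_{i,k})$.

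Second, I would establish the regularity needed to swap the order of differentiation. The scalar function $S_i=\varphi\circ A_i\circ P_i(\,\cdot\,;\theta_i)$ is a composition of the aggregation map $\varphi$, the activation $A_i$, and the parameterized map $P_i$; under the standing assumption that each of these components is twice continuously differentiable in $\theta_i$ on a neighborhood of the current parameter (which covers the typical implementations listed earlier, where $P_i$ is affine and $A_i$ is a smooth activation, with the standard caveat that ReLU is handled almost everywhere), the function $S_i$ is itself $C^2$ in $\theta_i$. Schwarz's (Clairaut's) theorem then yields
\begin{equation}
\frac{\partial^2 S_i}{\partial\theta_{i,j}\,\partial\theta_{i,k}}
=\frac{\partial^2 S_i}{\partial\theta_{i,k}\,\partial\theta_{i,j}},
\end{equation}
which identifies $H_i[j,k]$ with the $k$-th component of $\nabla_{\theta_i}g_{i,j}$. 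Letting $k$ range over $1,\dots,p_i$ gives the full row identity $H_i[j,:]=\nabla_{\theta_i}g_{i,j}$, and letting $j$ range yields the lemma.

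There is essentially no analytic difficulty beyond these two steps; the ``main obstacle'' is only bookkeeping, namely stating the $C^2$ hypothesis cleanly and noting where the symmetry of the Hessian is used. I would close with a brief remark that this decomposition is what makes the row-wise scheme practically attractive: one backward pass on $S_i$ produces $g_i$, after which each row $H_i[j,:]$ is obtained by a further backward pass on the scalar $g_{i,j}$, so the full matrix is assembled in $p_i$ Hessian-vector-style sweeps rather than through the $p_i^2$ second-order derivatives directly.
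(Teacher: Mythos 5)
Your proposal is correct and follows essentially the same route as the paper: unpack the definition of $H_i[j,k]$, set $g_{i,j}=\partial S_i/\partial \theta_{i,j}$, and read off that the $j$-th row of $H_i$ is $\nabla_{\theta_i} g_{i,j}$. The only difference is that the paper identifies $H_i[j,k]$ with $\partial g_{i,j}/\partial\theta_{i,k}$ directly from the definition, without invoking Schwarz's theorem or a $C^2$ hypothesis, so your explicit symmetry and regularity discussion is added care rather than a different argument.
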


\begin{proof}
  By the definition of the Hessian matrix, its element $H_i[j,k]$ is given by:
  \begin{equation}
    H_i[j,k] = \frac{\partial^2 S_i(\theta_i)}{\partial \theta_{i,j} \partial \theta_{i,k}}
  \end{equation}

  Denoting $g_{i,j} = \frac{\partial S_i}{\partial \theta_{i,j}}$, we have
  \begin{equation}
    H_i[j,k] = \frac{\partial g_{i,j}}{\partial \theta_{i,k}}
  \end{equation}

  Thus, the $j$-th row of $H_i$ is the gradient of the $j$-th component of the gradient of $S_i$.
\end{proof}

This approach significantly reduces computational costs when working with large models, as it does not require
storing the entire Hessian matrix of size $p_i \times p_i$ in memory simultaneously.

The proposed method is particularly important for the analysis of modern deep neural networks containing millions
of parameters, since the full $LH_i$ matrix for such models would be prohibitively large.
The local approach not only makes the computations practically feasible, but also enables focused analysis of individual
network components, which is often more informative than global analysis.

\subsection{Algorithm for Computing the Local Hessian}

\begin{algorithm}
  \caption{Computation of Local Hessians in a Neural Network}
  \label{alg:hessian}
  \begin{algorithmic}[1]
    \Require Neural network $\mathcal{F}$, input $x \in \mathbb{R}^d$, aggregation function $\varphi$
    \Ensure Set of local Hessian matrices $\{LH_1, LH_2, \ldots, LH_n\}$

    \State Decompose $\mathcal{F}$ into functional blocks $\{C_1, C_2, \ldots, C_n\}$, where $C_i = (P_i, A_i)$

    \For{$i = 1$ to $n$}
    \State Compute $z_i = (C_{i-1} \circ \ldots \circ C_1)(x)$ \Comment{Input to block $C_i$}
    \State Compute $y_i = A_i(P_i(z_i; \theta_i))$ \Comment{Output of block $C_i$}
    \State Compute $S_i = \varphi(y_i)$ \Comment{Scalar function for the block}

    \State Compute the gradient $g_i = \nabla_{\theta_i} S_i$

    \State Initialize $LH_i \in \mathbb{R}^{p_i \times p_i}$ as a zero matrix

    \For{$j = 1$ to $p_i$}
    \If{$g_{i,j}$ is not constant with respect to $\theta_i$}
    \State Compute $LH_i[j,:] = \nabla_{\theta_i} g_{i,j}$
    \Else
    \State $LH_i[j,:] = \vec{0}$
    \EndIf
    \EndFor
    \EndFor

    \State \Return $\{LH_1, LH_2, \ldots, LH_n\}$
  \end{algorithmic}
\end{algorithm}

The graphical representation of the algorithm is shown in Fig.~\ref{fig:layerwise_hessian}. This algorithm allows for
the efficient computation of local Hessians for each layer of a neural network, focusing on the specific
functional transformations performed by each layer. The sequential computation of the Hessian rows ensures that
memory usage remains manageable, even for large models with millions of parameters.

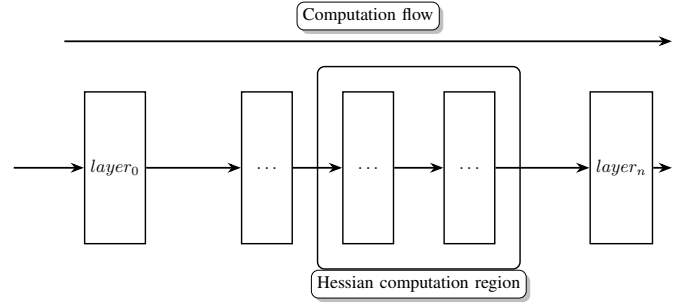
\begin{figure}[!t]
  \centering
  \resizebox{\columnwidth}{!}{%
    \begin{tikzpicture}[
        box/.style={draw, minimum width=1.2cm, minimum height=3cm, thick},
        smallbox/.style={draw, minimum width=1cm, minimum height=3cm, thick},
        arrow/.style={-Stealth, line width=1pt},
      ]
      \draw[arrow] (-2,4) -- (10,4);
      \node[draw, fill=white, rounded corners, drop shadow] at (4,4.5)
      {Computation flow};

      \draw[rounded corners, thick] (3,-0.5) rectangle (7,3.5);

      \node[draw, fill=white, rounded corners, drop shadow] at (5,-0.8) {Hessian computation region};

      \node[box]      (L0) at (-1,1.5) {$layer_0$};
      \node[smallbox] (L1) at (2,1.5) {$\ldots$};
      \node[smallbox] (L2) at (4,1.5) {$\ldots$};
      \node[smallbox] (L3) at (6,1.5) {$\ldots$};
      \node[box]      (Ln) at (9,1.5) {$layer_n$};

      \draw[arrow] (-3,1.5) -- (L0);
      \draw[arrow] (L0) -- (L1);
      \draw[arrow] (L1) -- (L2);
      \draw[arrow] (L2) -- (L3);
      \draw[arrow] (L3) -- (Ln);
      \draw[arrow] (Ln) -- (10,1.5);
    \end{tikzpicture}
  }
  \caption{Visualization of local Hessian computation for a neural network. The Hessian computation region
  indicates where layer-wise Hessians are calculated during forward propagation.}
  \label{fig:layerwise_hessian}
\end{figure}

\subsection{Mathematical Implementation Details}

\subsubsection{Gradient Computation}
In the context of automatic differentiation, the gradient $g_i$ is computed as:
\begin{equation}
  g_i = \nabla_{\theta_i} S_i = \frac{\partial S_i}{\partial y_i} \cdot \frac{\partial y_i}{\partial P_i}
  \cdot \frac{\partial P_i}{\partial \theta_i}
\end{equation}

where:
\begin{itemize}
  \item $\frac{\partial S_i}{\partial y_i} = \nabla_{y_i} \varphi(y_i)$ is the gradient of the aggregation function,
  \item $\frac{\partial y_i}{\partial P_i} = \nabla_{P_i} A_i(P_i)$ is the Jacobian of the activation function,
  \item $\frac{\partial P_i}{\partial \theta_i}$ is the Jacobian of the parameterized transformation with
    respect to its parameters.
\end{itemize}

\subsubsection{Computation of Hessian Matrix Rows}
For each component $j$ of the gradient $g_i$, its gradient with respect to $\theta_i$ is computed as:
\begin{equation}
  H_i[j,:] = \nabla_{\theta_i} g_{i,j} = \nabla_{\theta_i} \left( \frac{\partial S_i}{\partial \theta_{i,j}} \right)
\end{equation}

This requires repeated application of automatic differentiation to each component of the gradient.

\subsection{Spectral Analysis of the Local Hessian}

A detailed study of the spectrum of $LH_i$ provides an important tool for understanding the geometry of the
parameter space. The distribution and structure of the eigenvalues reflect key curvature properties of the
function, which is especially relevant for analyzing the conditioning of the optimization
problem~\cite{sagun2016, liaomahoney2021}.

For each $LH_i$, one can compute:

\begin{equation}
  LH_i = U_i\Lambda_i U_i^T = \sum_{j=1}^{p_i} \lambda_{i,j} u_{i,j} u_{i,j}^T
\end{equation}

where $\lambda_{i,j}$ is the $j$-th eigenvalue and $u_{i,j}$ is the corresponding eigenvector.

Characteristic spectral indicators of the Hessian include:

\begin{itemize}
  \item \textbf{Trace of the Hessian:} $\text{tr}(LH_i) = \sum_{j=1}^{p_i} \lambda_{i,j}$ - the sum of the eigenvalues
  \item \textbf{Determinant of the Hessian:} $\det(LH_i) = \prod_{j=1}^{p_i} \lambda_{i,j}$ - the product of
    the eigenvalues
\end{itemize}

Of particular interest is the observation of the distribution of eigenvalues across network layers and their
evolution during training, which enables tracking the changes in the geometry of the parameter space.

The distribution of eigenvalues of $LH_i$ provides information about the geometry of the functional space of
a layer. In particular, a concentration of eigenvalues near zero indicates the presence of manifolds of equal
function values (plateaus), which complicates optimization by gradient-based methods.

\begin{itemize}
  \item \textbf{Small networks:} If the network is too small (few layers $d_l$ or narrow layers), the linear
    transformation $z^{(l)}=W^{(l)}a^{(l-1)}+b^{(l)}$ becomes poorly expressed, leading to a large mean
    $\mathbb{E}[|z^{(l)}|]\gg1$ and shifting $z^{(l)}$ into the saturation regions of activation functions
    such as sigmoid or $\tanh$, where $f'(z)\approx0$. This exacerbates the vanishing gradient problem and
    significantly affects the structure of $LH_i$, concentrating its eigenvalues near zero.
  \item \textbf{Overly large networks:} An excessively large network can ``memorize'' noisy details of the
    data, resulting in an increase in the norm of the weights $||W^{(l)}||\gg1$ and shifting $z^{(l)}$ into
    the saturation region. Under such conditions, $LH_i$ also acquires a specific structure with many very
    small eigenvalues, reflecting excessive freedom in the parameter space.
  \item \textbf{Inappropriate architecture or inputs:} The architecture may be unsuitable for the specific
    properties of the data (high nonlinearity, variability of distributions, multidimensional dependencies).
    Poorly normalized or noisy inputs further increase the spread of $z^{(l)}$, intensifying saturation. As a
    result, neurons ``freeze'' and cease to participate effectively in training.
\end{itemize}

\section{Research Methodology}
\label{sec:methodology}

This section describes the comprehensive experimental framework employed to validate our theoretical findings
and investigate the practical utility of local Hessian analysis. We detail the experimental design, data
collection procedures, and analytical methods used to extract meaningful insights from the collected data.

\subsection{Experimental Design}

To assess the spectral properties of local Hessians, a comprehensive analysis was conducted on neural
networks of various architectures across a set of 37 datasets (22 for classification tasks and 15 for
regression). For each dataset, the following parameters were varied:

\begin{itemize}
  \item Number of layers and neurons in the networks
  \item Weight initialization methods
  \item Optimization algorithms (Adam, SGD, RMSProp)
  \item Activation functions (ReLU, Sigmoid, Tanh)
\end{itemize}

The total number of parameters in the studied models ranged from 13 to 9 million, and the number of layers
varied from 1 to 5, enabling the analysis of networks with different levels of parameterization. For both
tasks, the classical multilayer perceptron was used as the primary architecture. Cross-entropy loss was
employed for classification tasks, and mean squared error was used for regression tasks. Hyperparameters for
each dataset were selected empirically.

\subsection{Experimental Data Collection}

A specialized system was developed to monitor the evolution of internal characteristics of neural networks
during training. The experiments followed the methodology outlined below:

\begin{enumerate}
  \item For each dataset, three model variants were trained:
    \begin{itemize}
      \item A model with a small number of parameters (type ``no'')
      \item A model with a moderate number of parameters (type ``sure'')
      \item A model with a large number of parameters (type ``huge'')
    \end{itemize}

  \item During training, at each checkpoint iteration, the following data were recorded:
    \begin{itemize}
      \item Model weights and their spectral characteristics (distribution, statistics)
      \item Parameter gradients and their spectral characteristics
      \item $LH_i$ matrices for all layers and their eigenvalues
      \item Model quality metrics (for classification: Accuracy, Precision, Recall, F1, AUC; for regression:
        R2, MAE, RMSE)
      \item Value of the loss function on the training set
    \end{itemize}
\end{enumerate}

Special attention was paid to the computation of $LH_i$, for which a custom efficient algorithm with
component-wise computation and memory optimization was employed. This enabled the calculation of Hessians
even for models with a large number of parameters.

For each model, between 50 and 150 checkpoints were collected depending on the convergence rate, resulting in
a total of approximately 1500 network state snapshots with an aggregate size of about 50 gigabytes.

\subsection{Methodology and Experimental Data Processing}

A multi-stage approach was employed to analyze the collected experimental data, incorporating the following methods:

\begin{enumerate}
  \item \textbf{Correlation analysis:} Calculation of Pearson correlation coefficients between model
    parameters and quality metrics, with results visualized using heatmaps.

  \item \textbf{Spectral analysis:} Investigation of the eigenvalue distributions of weights and $LH_i$,
    including computation of statistical characteristics (mean, standard deviation, extrema).

  \item \textbf{Canonical Correlation Analysis (CCA):} Examination of nonlinear relationships between groups
    of quality metrics (Accuracy, Precision, F1, etc.) and internal network parameter characteristics. All
    data were standardized to ensure the validity of the analysis.

  \item \textbf{Visualization:} Use of heatmaps, scatter plots, and other graphical representations for clear
    interpretation of the results.

  \item \textbf{Statistical tests:} Assessment of the distributions of characteristics using the Shapiro-Wilk
    test~\cite{shapiro1965} to identify deviations from normality.
\end{enumerate}

Special attention was paid to the spectral analysis of $LH_i$ and their correlation with model quality
metrics. This comprehensive approach enabled the identification of not only direct correlations between
individual parameters, but also more complex relationships between groups of parameters.

\section{Experimental Results}
\label{sec:results}

This section presents the comprehensive empirical findings from our large-scale experimental study across
111 experiments on 37 datasets. We analyze the spectral properties of local Hessians and their relationship
to network performance, architectural characteristics, and generalization ability.

\subsection{Primary Experimental Findings}

Experiments were conducted on computational clusters utilizing GPUs to accelerate calculations. The analysis
employed Python libraries such as NumPy, SciPy, Matplotlib, and Seaborn. Data collection required
approximately 40 hours, including model training and computation of $LH_i$. This duration was primarily due
to the initial, non-optimized implementation of the local Hessian, which necessitated recalculating all
gradients and Hessians at each training iteration. Subsequently, an optimized version of the algorithm was
developed, significantly reducing computation time.

Based on the collected data, several key observations can be highlighted:
\begin{itemize}
  \item Spectral analysis of $LH_i$ provides valuable insights into the internal structure and functioning of
    neural networks. Networks with different architectures exhibit characteristic patterns in the spectral
    properties of their Hessians, although the formal interpretation of all observed phenomena remains an open question.
  \item A more detailed analysis of the dynamics of $LH_i$ evolution across layers is required. It is
    advisable to enhance the use of canonical correlation analysis (CCA) and consider the application of
    factor analysis to uncover latent dependencies. Current analytical methods do not always reveal all
    possible relationships between parameters and quality metrics. Often, these issues indicate the presence
    of multiple architectural problems simultaneously, which may not be directly related. Such tools are
    limited in diagnosing network weaknesses, especially when training hyperparameters are suboptimal.
  \item Additionally, it is recommended to investigate the geometry of the gradient flow and Hessian
    properties in the context of differential geometry, as well as to consider modeling the gradient flow on
    the parameter manifold in the limiting case of activation function saturation.
\end{itemize}

\subsection{Comparative Analysis of Architectural Solutions Based on CCA}

The investigation of three types of architectures with varying parameterization (hereinafter referred to as
``no''-small, ``sure''-moderate, and ``huge''-overparameterized) revealed substantial differences in the
spectral properties of their $LH_i$. To identify relationships between model parameters and their
performance, canonical correlation analysis (CCA) was employed, enabling the establishment of correlations
between two groups of variables:

\begin{itemize}
  \item \textbf{Group A}: model quality metrics-Accuracy, Precision, Recall, F1, AUC, and train\_loss
  \item \textbf{Group B}: neural network parameters-weights, gradients, Hessian eigenvalues, and their
    spectral characteristics
\end{itemize}

The CCA analysis revealed the dependencies summarized in Table~\ref{tab:cca_max}.

\begin{table}[!t]
  \renewcommand{\arraystretch}{1.3}
  \caption{CCA Correlation Statistics Across Architectures}
  \label{tab:cca_max}
  \centering
  \begin{tabular}{|l|c|c|c|}
    \hline
    \textbf{Statistic} & \textbf{no} & \textbf{sure} & \textbf{huge} \\
    \hline
    max & 0.406 & 0.349 & 0.429 \\
    avg & -0.955 & 0.099 & 0.220 \\
    median & 0.182 & 0.182 & 0.189 \\
    min & -0.965 & -0.770 & 0.149 \\
    std & 0.976 & 0.277 & 0.082 \\
    \hline
  \end{tabular}
\end{table}

These data are clearly illustrated by the CCA Score statistics comparison plot
(Fig.~\ref{fig:cca_score_comparison}), which demonstrates significant differences in minimum values between
architectures and, in particular, highlights the high variability of the moderate architecture (``sure'')
with a minimum value around -0.77. While this is higher than that of the small architecture (``no''), it
remains considerably lower than that of the large architecture (``huge'').

\begin{figure}[!t]
  \centering
  \includegraphics[width=\columnwidth]{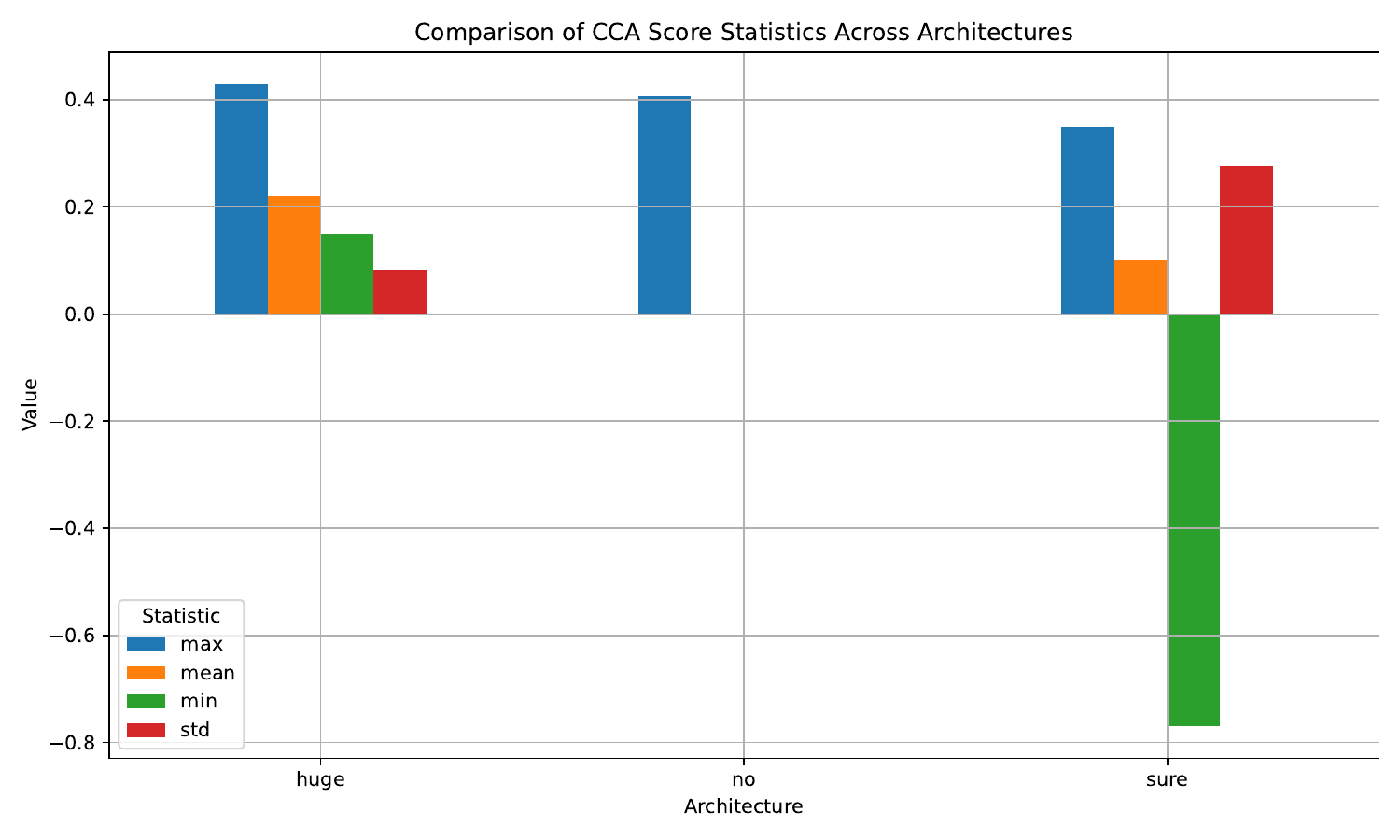}
  \caption{Comparison of CCA Score statistics across architectures. Large architectures (``huge'') exhibit
  the highest stability with standard deviation of 0.082, while small architectures (``no'') show extreme
  variability (std=0.976).}
  \label{fig:cca_score_comparison}
\end{figure}

The results indicate that large architectures (``huge'') exhibit the highest and most stable CCA correlation
values (standard deviation of only 0.082), suggesting a more robust relationship between internal network
parameters and prediction quality. In contrast, small architectures (``no'') display extreme negative
outliers in both mean (-0.955) and minimum (-0.965) values, as well as high variability (standard deviation
0.976), reflecting the instability of their functional behavior.

\subsection{Spectral Characteristics of Gradients in Different Architectures}

To analyze the spectral characteristics of gradients, the Welch method was employed to estimate the power
spectral density (PSD) of gradients across layers. The following parameters were used for the Welch method:
\begin{itemize}
  \item Window length: 256
  \item Overlap step: 128
  \item Window function: Hanning
\end{itemize}

Spectral analysis of the gradients in the third layer (Fig.~\ref{fig:layer3_gradient_spectral}) revealed
dramatic differences between the studied architectures. The maximum PSD values obtained via the Welch method
for the ``huge'' architecture exceed those of the ``no'' architecture by more than two orders of magnitude,
reaching values on the order of $1.2\times10^6$. The mean values also show a significant increase from small
to large architectures, indicating a fundamentally different structure of the gradient space in highly
parameterized models.

\begin{figure}[!t]
  \centering
  \includegraphics[width=\columnwidth]{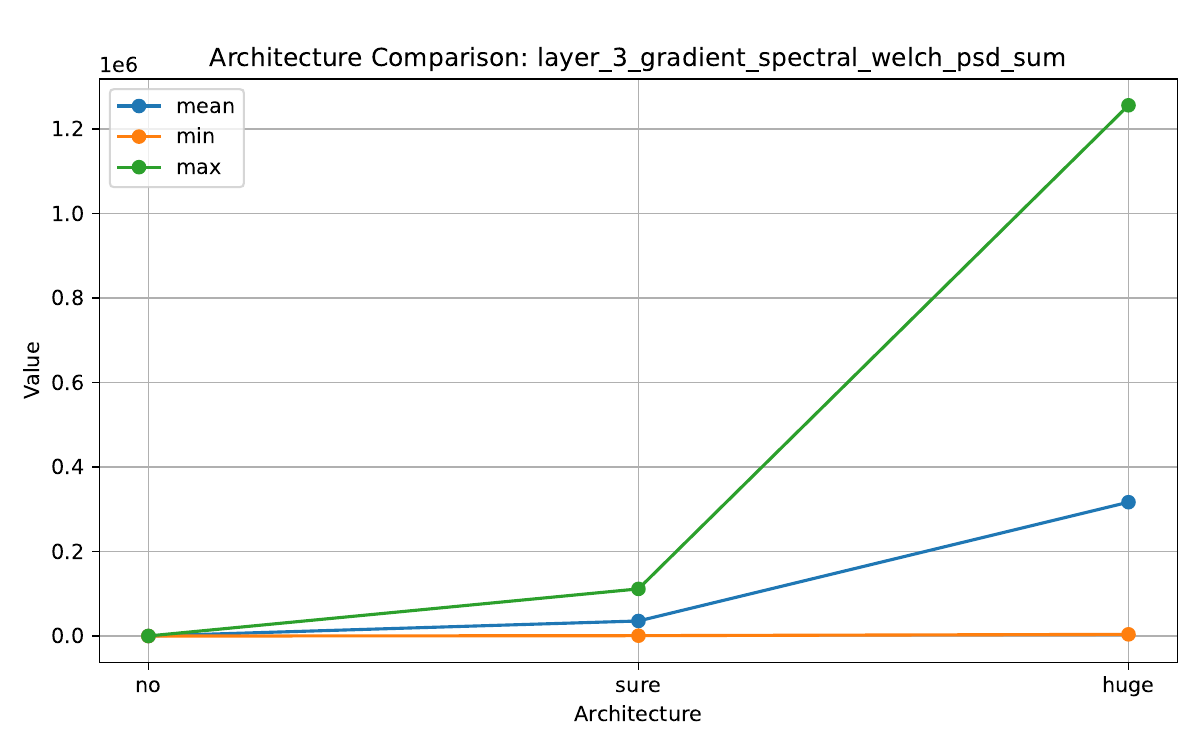}
  \caption{Comparison of spectral characteristics of third-layer gradients. The ``huge'' architecture shows
  PSD values exceeding the ``no'' architecture by over 100 times, indicating qualitatively different gradient
  propagation dynamics.}
  \label{fig:layer3_gradient_spectral}
\end{figure}

Such a scale of differences indicates a qualitative change in the nature of gradient propagation in large
architectures, where high-frequency components with substantially greater energy are formed. These
observations are consistent with the notion that overparameterization facilitates the emergence of a more
complex functional surface structure with numerous local features.

\subsection{Distribution of Canonical Weights Across Architectures}

The analysis of the distribution of canonical X and Y weights revealed structural features in how network
parameters influence performance. The visualizations presented in Figs.~\ref{fig:cca_x_weights}
and~\ref{fig:cca_y_weights} demonstrate substantial differences in the weight distributions between architectures.

\begin{figure}[!t]
  \centering
  \includegraphics[width=\columnwidth]{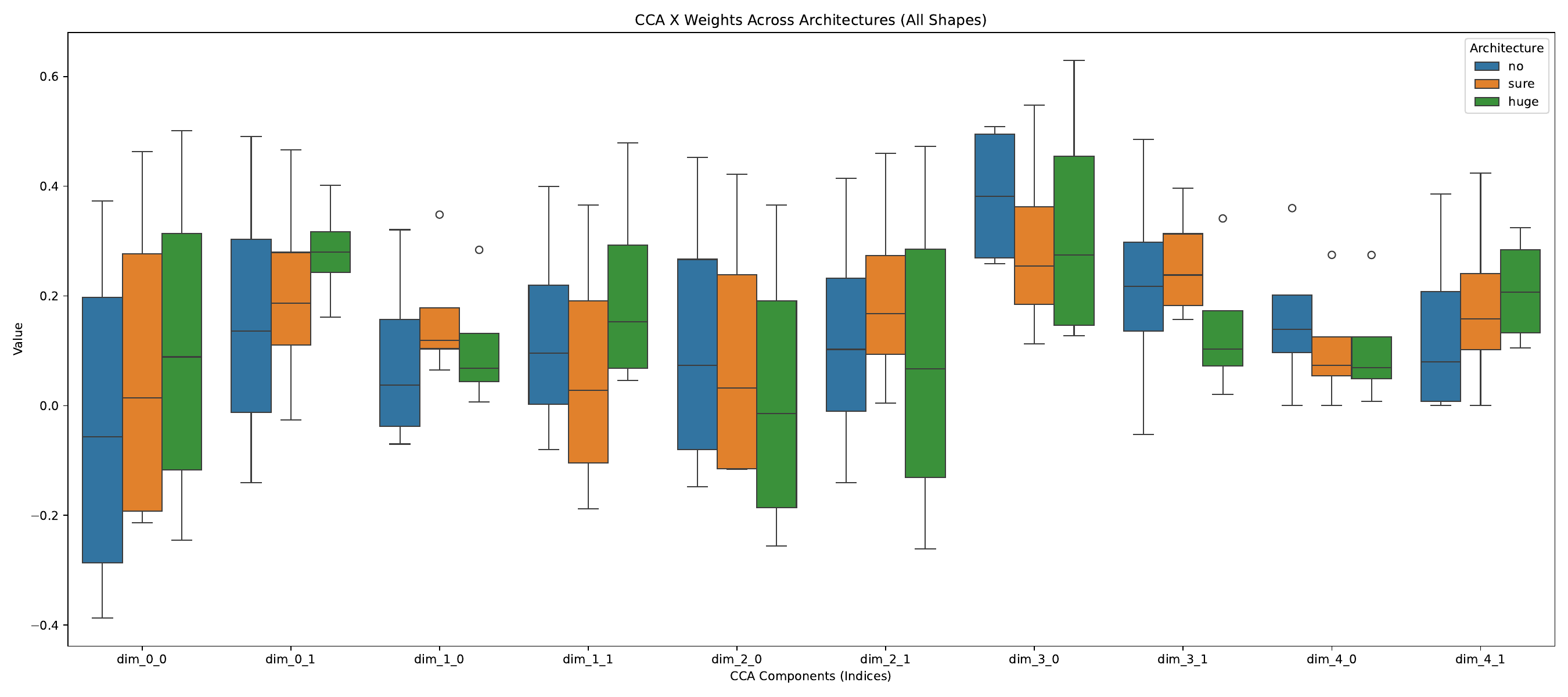}
  \caption{Distribution of canonical X-weights across architectures. Large architectures (``huge'') show
  more uniform distribution, while small architectures (``no'') exhibit concentrated structure with
  dominant components.}
  \label{fig:cca_x_weights}
\end{figure}

\begin{figure}[!t]
  \centering
  \includegraphics[width=\columnwidth]{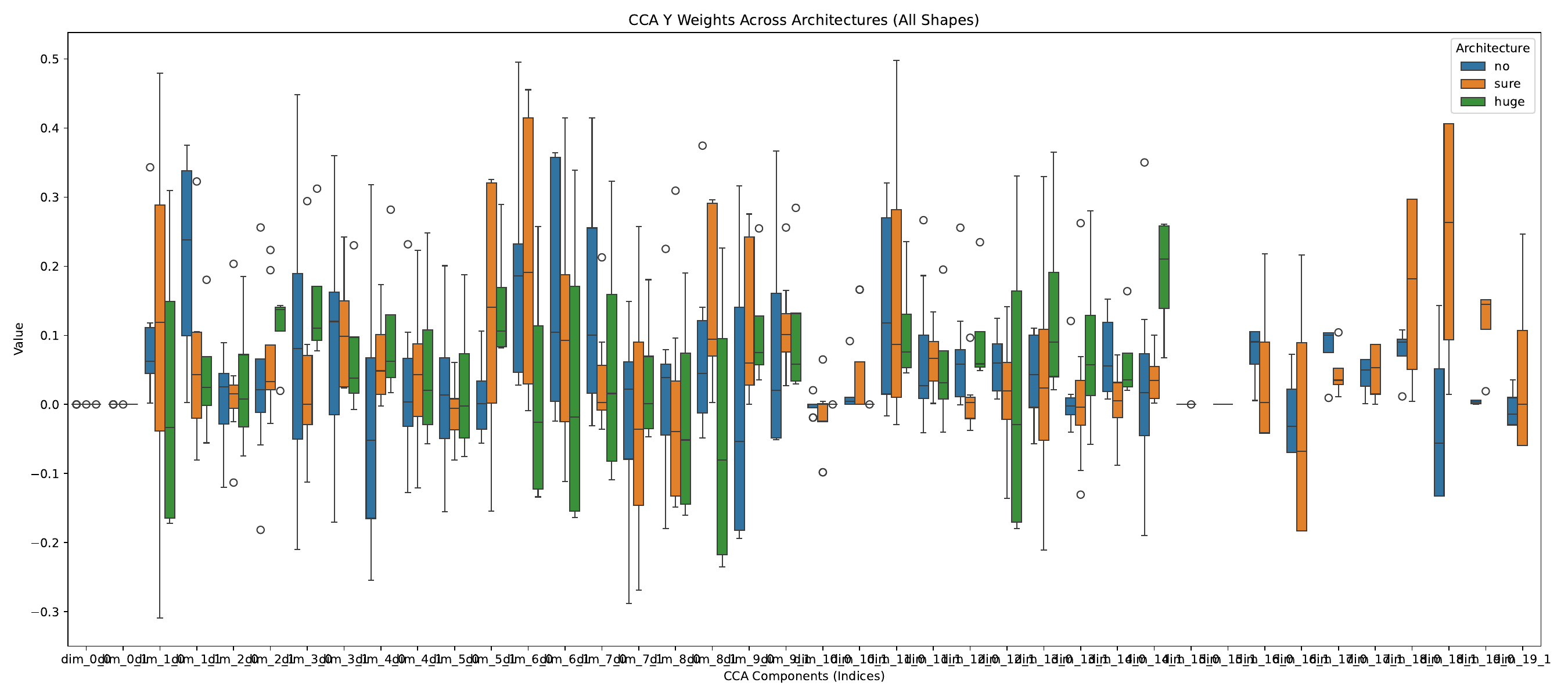}
  \caption{Distribution of canonical Y-weights across architectures. Large architectures display denser,
  centered distribution around zero, while small architectures show significant asymmetry with extreme outliers.}
  \label{fig:cca_y_weights}
\end{figure}

The following pattern is observed: for X-weights (linking quality metrics to canonical variables), large
architectures (``huge'') exhibit a more uniform distribution across the entire spectrum of components,
indicating a more balanced utilization of the full set of parameters to achieve high performance. In
contrast, small architectures (``no'') are characterized by a more concentrated structure with several
dominant components, suggesting that certain parameters are ``overstressed'' to achieve the desired outcome.

For Y-weights, which connect neural network parameters to canonical variables, an even more pronounced
differentiation is observed: in large architectures, the distribution is denser and centered around zero with
minor outliers, whereas in small architectures, there is significant asymmetry with extreme values at the
distribution tails. This supports the hypothesis that in underparameterized models, individual parameters
bear a disproportionately high load, reducing the model's robustness to input variations.

\subsection{Dynamics of Canonical Weight Coefficients}

The analysis of canonical weight coefficients revealed significant differences between architectures. Each
canonical vector represents a linear combination of the original variables that maximizes the correlation
between groups A and B. Particularly notable are the differences in the structure of the Y-component weights,
which link network parameters to quality metrics (see Fig.~\ref{fig:feature_importance}).

\begin{figure}[!t]
  \centering
  \includegraphics[width=\columnwidth]{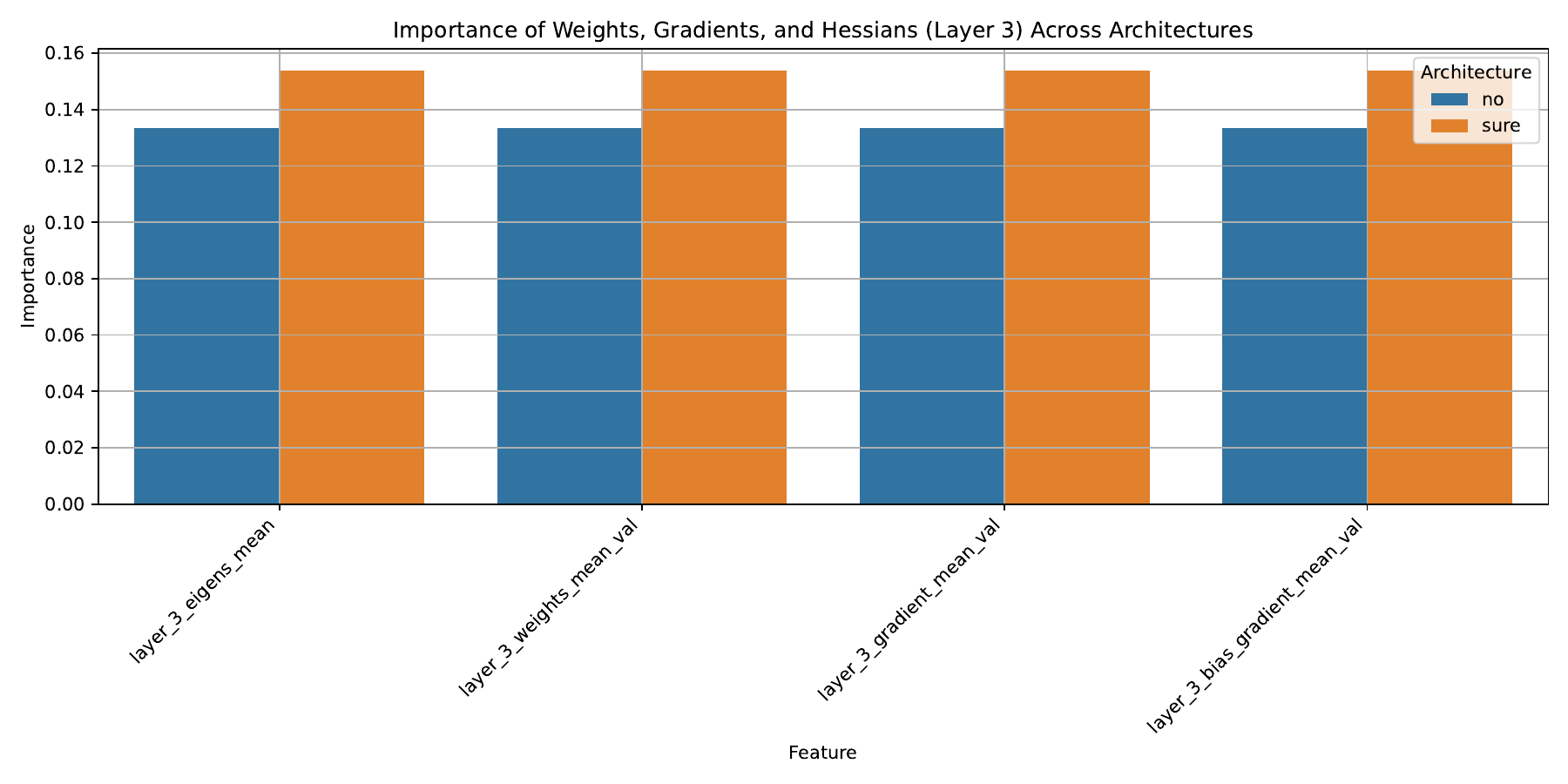}
  \caption{Importance of group B parameters (weights, gradients, and Hessians) of the third layer for
  different architectures. Small architectures show gradient dominance, while large architectures exhibit
  greater importance of Hessian eigenvalues.}
  \label{fig:feature_importance}
\end{figure}

In small architectures, there is a pronounced dominance of weights associated with the gradient component,
whereas in large architectures, the eigenvalues of the Hessian become more important. This observation
supports the theoretical hypothesis that, in underparameterized models, the learning dynamics are primarily
determined by local gradients, while in overparameterized models, the curvature of the functional surface
plays a more significant role.

A detailed analysis of the differentiating weight parameters of group B (Table~\ref{tab:weight_diff})
revealed extreme differences between architectures:

\begin{table}[!t]
  \renewcommand{\arraystretch}{1.3}
  \caption{Group B Parameters with Largest Differences Between Architectures}
  \label{tab:weight_diff}
  \centering
  \begin{tabular}{|c|c|c|c|c|}
    \hline
    \textbf{Huge} & \textbf{no} & \textbf{sure} & \textbf{huge - no} & \textbf{huge - sure} \\
    \hline
    4384.0 & 3.42 & 212.92 & 4380.58 & 4171.08 \\
    1208.0 & 30.85 & 376.31 & 1177.15 & 831.69 \\
    620.67 & 3.88 & 37.54 & 616.78 & 583.13 \\
    196.0 & 12.0 & 46.0 & 184.0 & 150.0 \\
    \hline
  \end{tabular}
\end{table}

These data demonstrate radical differences in parameter magnitudes between large and small architectures,
reaching up to three orders of magnitude (4380.58). Such a contrast indicates a qualitatively different
operating regime in overparameterized networks, where the accumulation of weights can reach significant
values without negatively affecting prediction quality, due to compensatory effects between layers.

\subsection{Statistical Properties of Canonical Correlation Weights}

An additional analysis of the statistical properties of CCA weights (Table~\ref{tab:cca_stats}) revealed an
interesting asymmetry in the distribution of variability across dimensions:

\begin{table}[!t]
  \renewcommand{\arraystretch}{1.3}
  \caption{Statistical Properties of CCA Weights for Different Architectures and Shapes}
  \label{tab:cca_stats}
  \centering
  \begin{tabular}{|l|c|c|c|c|}
    \hline
    \textbf{Arch.} & \textbf{Type} & \textbf{Shape} & \textbf{Var.(d0)} & \textbf{Var.(d1)} \\
    \hline
    no & X & (5,2) & 0.1899 & 0.1315 \\
    sure & X & (5,2) & 0.2159 & 0.1274 \\
    huge & X & (5,2) & 0.2064 & 0.1553 \\
    \hline
    no & Y & (15,2) & 0.0000 & 0.1292 \\
    sure & Y & (15,2) & 0.0000 & 0.1671 \\
    huge & Y & (15,2) & 0.0000 & 0.0643 \\
    \hline
    no & Y & (20,2) & 0.0000 & 0.0665 \\
    sure & Y & (20,2) & 0.0000 & 0.0003 \\
    \hline
  \end{tabular}
\end{table}

It is noteworthy that the variance for the first dimension of Y-weights is practically zero across all
architectures, indicating a strict structural relationship between network parameters and quality metrics
along certain directions. In contrast, the X-weights, which reflect the contribution of neural network
parameters to the canonical variables, exhibit substantial variance in both dimensions, suggesting greater
flexibility in the formation of the network's internal representations.

It should also be noted that the medium-sized architecture (``sure'') demonstrates the highest variance of
Y-weights for the (15, 2) shape, but a sharply reduced variance for the (20, 2) shape, which may indicate
more efficient parameter utilization compared to other architectures.

\subsection{Clustering of Architectures by Spectral Properties}

Of particular interest are the results of architectural analysis after dimensionality reduction using PCA
(see Fig.~\ref{fig:architecture_clustering}).

\begin{figure}[!t]
  \centering
  \includegraphics[width=\columnwidth]{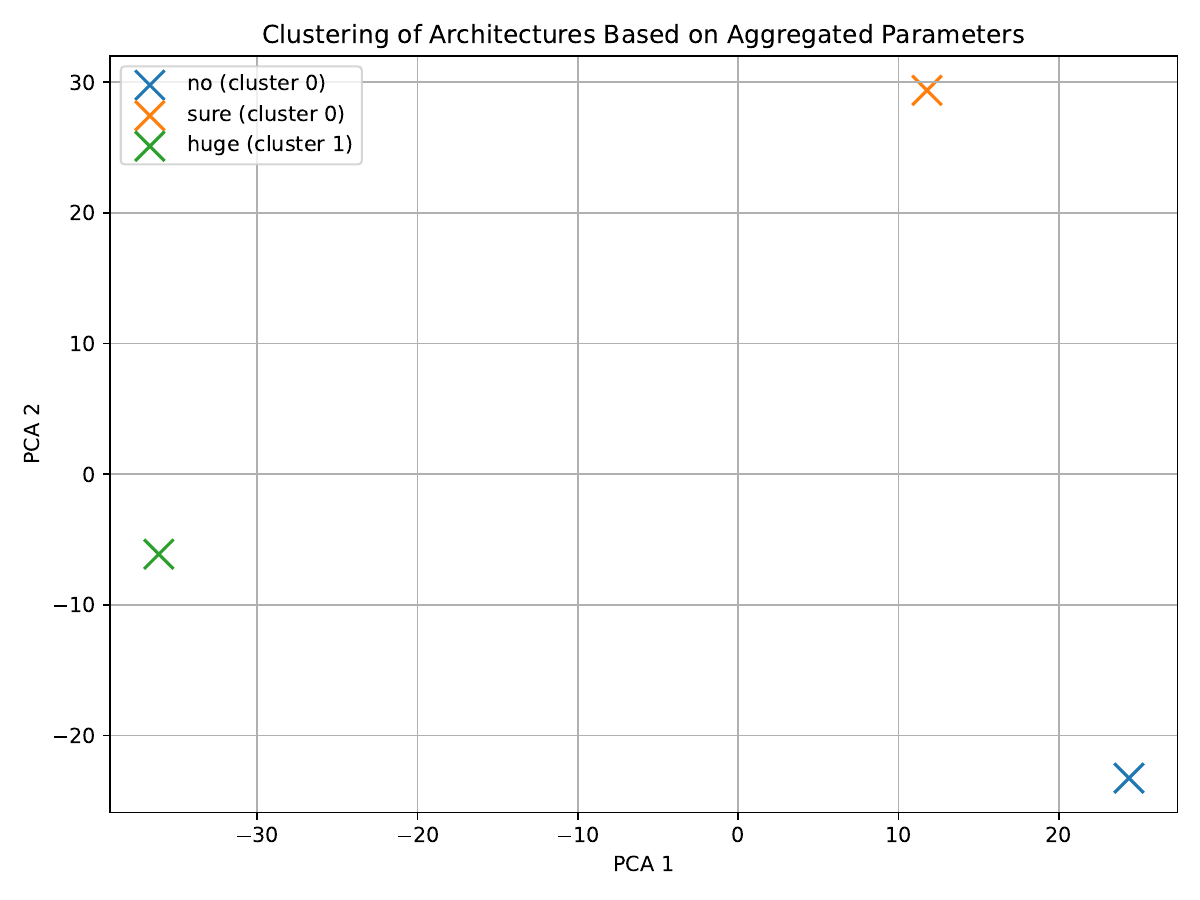}
  \caption{Distribution of architectures in the space of spectral characteristics of Hessians after
  dimensionality reduction. Three distinct clusters correspond to small (``no''), medium (``sure''), and
  large (``huge'') architectures, indicating qualitative differences in their parameter spaces.}
  \label{fig:architecture_clustering}
\end{figure}

The obtained results demonstrate a clear separation of all three architectures in the principal component
space: the small (``no''), medium (``sure''), and large (``huge'') architectures form three distinct groups
of points, significantly distant from each other. This indicates substantial differences in their parameter
spaces and functional behavior.

Such pronounced separation points to the existence of several ``complexity thresholds,'' the crossing of
which leads to qualitative changes in the functional behavior of the network. The most dramatic transition is
observed between the ``sure'' and ``huge'' architectures, supporting the hypothesis that the ``huge''
architecture operates in a fundamentally different regime, characterized by a specific distribution of the
spectral properties of Hessians and their relationship with quality metrics.

\subsection{Relationship to Knowledge Transfer and Generalization}

One of the most significant findings of this study is the established connection between the spectral
structure of local Hessians and the model's generalization capability. Models exhibiting a more uniform
distribution of Hessian eigenvalues-without pronounced peaks and with less concentration near
zero-demonstrate superior performance on test datasets.

Analysis of the CCA weight statistics (Table~\ref{tab:cca_stats}) indicates that large architectures
(``huge'') display a more balanced distribution of X-weights along the first dimension (0.2064) compared to
small architectures (0.1899), which correlates with their enhanced generalization ability. At the same time,
the standard deviation of the defining parameter values in large architectures is significantly higher,
suggesting a greater capacity for fine-grained feature differentiation.

This pattern is observed regardless of the absolute number of model parameters, supporting the central
hypothesis of this research: local properties of layer characteristics are more critical for generalization
capability than the total number of parameters.

\subsection{Quantitative Assessment of Differences Between Architectures}

A particularly important aspect is the quantitative assessment of differences between architectural
solutions. The analysis of the top differentiating parameters of group B (see Table~\ref{tab:weight_diff})
shows that the differences in weights between large and small architectures can reach up to three orders of
magnitude (4380.58 for the parameter with the largest difference).

The plot of the spectral characteristics of the third layer gradients
(Fig.~\ref{fig:layer3_gradient_spectral}) clearly demonstrates that the differences in maximum values between
the ``huge'' and ``no'' architectures can exceed a factor of one hundred, reaching absolute values on the
order of $1.2\times10^6$. Such a scale of differences indicates a fundamentally different nature of gradient
propagation in large architectures, where high-energy spectral components are formed.

It is noteworthy that the largest differences are observed between the ``huge'' and ``no'' architectures
(4380.58), while the differences between ``huge'' and ``sure'' (4171.08) are only slightly smaller. This
points to the existence of a ``complexity barrier,'' the crossing of which leads to a qualitative change in
the network's operating regime.

Such extreme differences in weights do not necessarily lead to model performance degradation, which
contradicts intuitive expectations. On the contrary, large models with extreme weight values demonstrate
higher result stability, as evidenced by the CCA correlation statistics (standard deviation of 0.082 for
``huge'' versus 0.976 for ``no'').

\subsection{Observations and Effects}

Several unexpected effects were discovered during the analysis:

\begin{enumerate}
  \item \textbf{Shift of CCA weights across datasets.} The analysis of CCA weights revealed a significant
    shift in the structure of weights between different datasets, even for similar architectures. This
    confirms a strong dependence of the network's functional behavior on the data structure and highlights
    the necessity of adapting the architecture to the specific task.

  \item \textbf{Nonlinear dependence of stability on architecture size.} Contrary to expectations, large
    architectures (``huge'') exhibit more stable spectral characteristics (lower standard deviation) than
    medium-sized ones (``sure''), which contradicts the intuitive assumption that overparameterization should
    lead to greater variability.

  \item \textbf{Asymmetry in the variance distribution of CCA weights.} Y-weights have almost zero variance
    along the first dimension for all architectures, but significant variance along the second dimension.
    This indicates the existence of structural constraints in the way network parameters affect quality metrics.

  \item \textbf{Opposite behavior of X and Y weights in distribution.} X-weights demonstrate a more compact
    structure with less variation between architectures, whereas Y-weights show significant differences both
    in the shape of the distribution and in the range of values, especially for extreme components.
\end{enumerate}

In addition to the listed results, the analysis revealed a number of correlation patterns directly related to
the rank of weight matrices, spectral characteristics of the Hessian, and the network's generalization ability:

\begin{enumerate}
  \item \textbf{Correlation between the rank of weights and Hessian with generalization quality.} There is a
    notable relationship between the reduced rank of weight matrices and the corresponding local Hessian and
    signs of overfitting and redundancy. These findings indicate a deterioration in the network's ability to
    generalize to unseen data when the rank is low.
    
  \item \textbf{Layer Hessian as an indicator of overfitting.} A sparse local Hessian or one with a
    predominant cluster of small eigenvalues correlates with insufficient generalization capacity of the
    layer. This effect is particularly pronounced in the final layers.
    
  \item \textbf{Symmetry of the Hessian spectrum and saddle points.} An almost symmetric distribution of
    Hessian eigenvalues around zero is associated with saddle points. Such points are often accompanied by
    small gradient norms.
    
  \item \textbf{Similarity of weights in adjacent layers.} In well-tuned networks, the weight matrices of
    neighboring layers exhibit pronounced similarity (in terms of SVD or spectrum), which can be interpreted
    as coordinated feature processing.
\end{enumerate}

\subsection{Practical Implications for Architecture Optimization}

The conducted analysis allows us to formulate several practical recommendations for optimizing neural network
architectures:

\begin{enumerate}
  \item \textbf{Optimal parameter allocation across layers.} The results show that a significant increase in
    the number of parameters in deep layers relative to the initial ones leads to the formation of high peaks
    in the Hessian spectrum, which may indicate overfitting in these layers. A more balanced parameter
    distribution is recommended.
    
  \item \textbf{Detection of insufficient expressivity.} Low values of the largest Hessian eigenvalues in the
    initial layers (observed in small architectures, ``no'') may serve as an indicator of insufficient model
    expressivity. In such cases, it is advisable to increase the number of parameters specifically in these layers.
    
  \item \textbf{Overfitting detection.} A high concentration of eigenvalues near zero in the final layers
    indicates overfitting and may signal the need for additional regularization or a reduction in the number
    of parameters in these layers.
    
  \item \textbf{Optimizer adaptation.} The structure of the Hessian spectrum can be used to adapt optimizer
    hyperparameters. For example, a high ratio of the largest to the smallest eigenvalue (the condition
    number) indicates the necessity of using adaptive optimization methods.
\end{enumerate}

\subsection{Investigation of the Internal Structure of Neural Network Layers}

Spectral analysis of local Hessians provides insights into the functional roles of layers:
\begin{itemize}
  \item Layers with a distributed spectrum without pronounced concentration perform complex nonlinear transformations.
  \item Layers with a peak near zero likely indicate overparameterization or activation saturation.
  \item A correlation is observed between the presence of several dominant eigenvalues and the layer's
    ability to extract key features.
\end{itemize}

These findings help to elucidate how the network solves the task and what transformations occur at different
levels of the hierarchy.

\subsection{Architecture Classification via Snapshots}

The obtained snapshots of weights, local Hessians, and gradients during training reveal several
characteristic patterns that could potentially be used for classifying neural network architectures. However,
these patterns do not provide a definitive conclusion regarding how exactly they influence generalization
quality or in which specific layer(s) of the network these effects manifest. Several attempts were made to
train a classifier based on these patterns, but the results were not satisfactory.

Nevertheless, by applying dimensionality reduction to the network snapshot data using the UMAP algorithm, an
interesting distribution of snapshots in the reduced space can be observed. Fig.~\ref{fig:umap} shows how
snapshots of weights, gradients, and Hessians are distributed in a two-dimensional space. Each point on the
plot corresponds to a single network snapshot, and the color indicates the network architecture.

\begin{figure}[!t]
  \centering
  \includegraphics[width=\columnwidth]{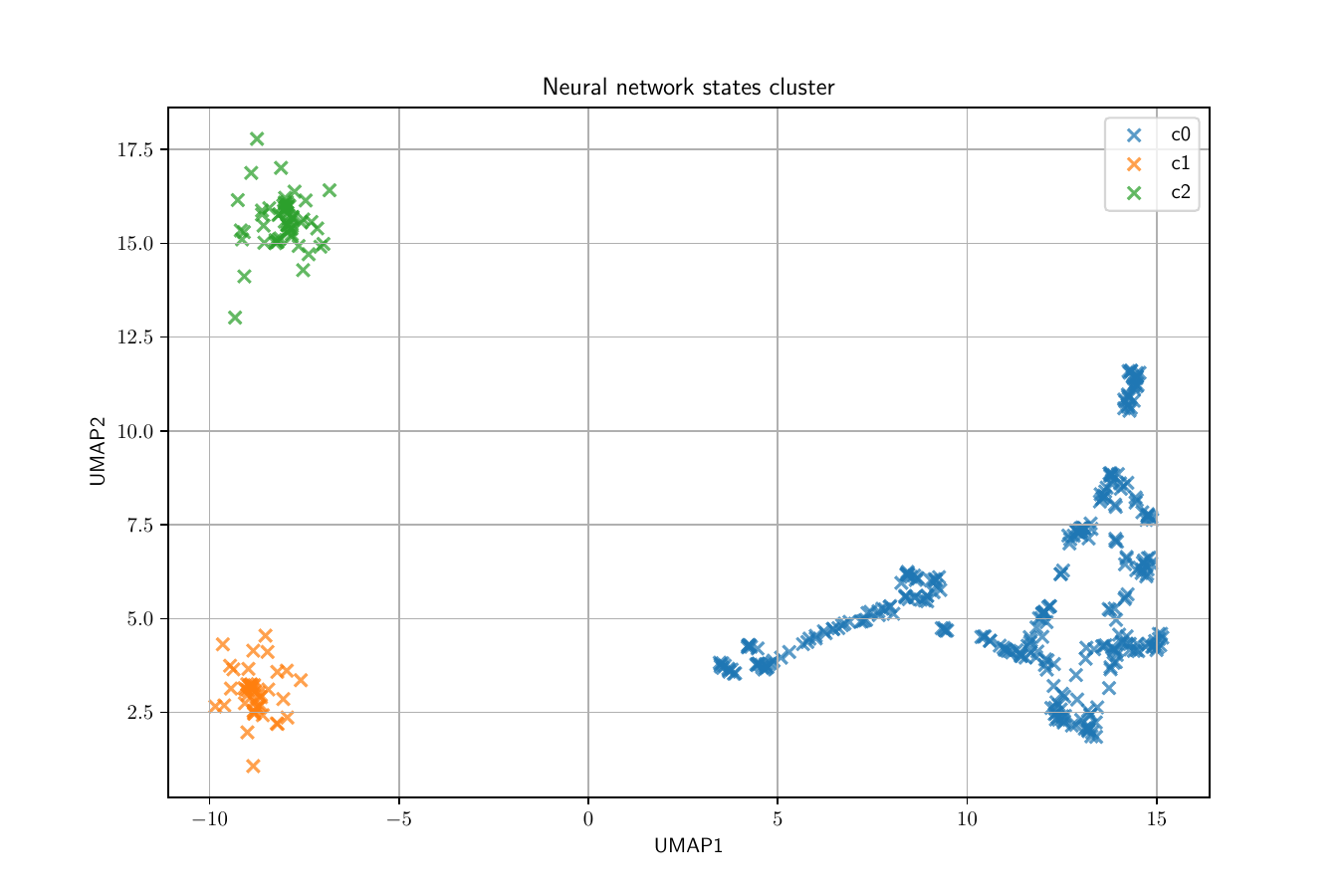}
  \caption{Distribution of snapshots of weights, gradients, and Hessians in a two-dimensional space after
  UMAP dimensionality reduction. Different architectures form distinct clusters, suggesting characteristic
  patterns in their internal representations.}
  \label{fig:umap}
\end{figure}

The plot demonstrates that snapshots from different architectures are distributed non-uniformly, forming
distinct clusters. This suggests that different architectures possess characteristic patterns in their
weights, gradients, and Hessians, which could be leveraged for classification. However, further research is
required for a more precise interpretation of these results.

\subsection{Generalized Practical Guidelines}

Based on the conducted research, the following practical recommendations can be formulated:
\begin{itemize}
  \item A reduced rank and symmetric spectrum of the Hessian should be regarded as warning signals, as they
    are associated with saddle point issues, overparameterization, and deteriorated generalization.
  \item Moderate spectral power and a sufficiently broad spectrum of eigenvalues are linked to the presence
    of a diversity of directions for network training and, consequently, a high potential for knowledge transfer.
  \item Bias parameters require regular monitoring: their contribution to the overall Hessian spectrum can
    serve as a metric for deviation from optimal geometry.
  \item Cross-correlation analysis of the spectra of weights, gradients, and the Hessian enables the
    identification of problematic directions and timely adaptation of the training plan (learning rate
    scheduling, application of second-order optimizers, addition of differential regularization, etc.).
\end{itemize}

Thus, in-depth spectral and rank analysis of local Hessians is a powerful diagnostic tool that allows for the
detection of hidden issues, formulation of recommendations for architecture and optimizer adjustment, and
quantitative assessment of the network's generalization capability.

\section{Discussion}
\label{sec:discussion}

The conducted research provides a mathematically grounded tool for studying the internal dynamics of neural
networks, moving beyond empirical trial-and-error approaches. A key result is the established connection
between the geometric properties of the parameter space and the functional behavior of the network, as
highlighted in the main contributions of this work.

\begin{proposition}
  Analyzing a neural network as a composition of nonlinear operators or as a chaotic dynamical system yields
  informative insights into its internal structure, data processing mechanisms, and the mathematical
  constraints of its architecture.
\end{proposition}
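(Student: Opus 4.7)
The plan is to establish the proposition by exhibiting, for each of the two analytical viewpoints, a concrete analytical pipeline whose outputs can be identified with the three target aspects: internal structure, data processing mechanisms, and architectural constraints. The proof will therefore be constructive rather than deductive: I would first fix what qualifies as an ``informative insight,'' namely a tractable, parameter-derived quantity whose magnitude or structure admits an interpretation in one of the three target categories. This reduces the proposition to showing that each viewpoint supplies at least one such quantity per category.

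For the composition-of-operators viewpoint, I would leverage the decomposition $\mathcal{F}(x;\theta) = (C_n \circ \cdots \circ C_1)(x)$ from the mathematical framework section and the local Hessian $LH_i = \nabla_{\theta_i}^2 S_i(\theta_i)$ from the definition that follows. The spectrum $\{\lambda_{i,j}\}_{j=1}^{p_i}$ together with the trace and determinant functionals already defined would be shown to characterize (i) internal structure, via the rank profile and eigendirection clustering across $i$; (ii) processing mechanisms, via the sensitivity maps $\partial y_i/\partial \theta_i$ embedded in the row-wise Hessian computation established in the lemma; and (iii) architectural constraints, via the conditioning of $LH_i$ and the concentration of eigenvalues near zero that signals saturation. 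For the chaotic-dynamical-system viewpoint, I would model training as a discrete flow on parameter space driven by the gradient $g_i$, and interpret the eigenvalues of $LH_i$ as Lyapunov-type multipliers governing local expansion or contraction along eigendirections $u_{i,j}$; plateaus and saddle behavior then correspond to invariant slow manifolds, and the empirically observed clustering into ``no,'' ``sure,'' and ``huge'' regimes plays the role of bifurcation-like transitions between qualitatively distinct dynamical regimes.

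With these identifications in place, the proof reduces to verifying, viewpoint by viewpoint and category by category, that the associated quantity is well-defined, computable by the algorithm of the preceding section, and interpretable in the intended category. The empirical sections of the paper already supply concrete witnesses: the CCA statistics, the third-layer spectral comparisons, the PCA-based architectural clustering, and the correspondence between Hessian rank and generalization quality each instantiate one of the required implications.

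The principal obstacle will be the informal character of the statement itself: ``yields informative insights'' is not a mathematical predicate, so any rigorous treatment must either (a) weaken the claim to ``there exist tractable, architecture-dependent functionals of $LH_i$ whose values separate the architectural regimes,'' which can be verified directly from the clustering and CCA results already reported, or (b) strengthen it to an injectivity statement asserting that the spectral signature of $\{LH_i\}_{i=1}^n$ determines the architecture up to an appropriate equivalence. The former route is tractable and is the one I would pursue; the latter would require additional structural hypotheses on the activation family and initialization distribution, and is best flagged as a direction for future work rather than attempted in full here.
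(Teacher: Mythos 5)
The paper does not actually prove this proposition: it appears in the Discussion section as an unproved, informal synthesis of the empirical study, supported only by the surrounding prose and the body of experimental results (CCA statistics, spectral comparisons, clustering). So there is no ``paper proof'' to match; your reconstruction of the operator-composition half---pointing to the decomposition $\mathcal{F} = C_n \circ \cdots \circ C_1$, the local Hessians $LH_i$, their spectra, and the empirical witnesses already reported---is essentially the implicit argument the authors rely on, and your observation that ``yields informative insights'' is not a mathematical predicate, together with the proposed weakening (a), is a fair and honest way to handle it.

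The genuine gap is in your chaotic-dynamical-system half. You propose to model training as a discrete flow ``driven by the gradient $g_i$'' and to read the eigenvalues of $LH_i$ as Lyapunov-type multipliers of that flow. But in this paper $g_i = \nabla_{\theta_i} S_i$ and $LH_i = \nabla_{\theta_i}^2 S_i$ are derivatives of the layer-wise aggregation $S_i(\theta_i) = \varphi(A_i(P_i(z_i;\theta_i)))$, not of the training loss; the actual parameter updates are driven by the loss gradient, so the flow you describe is not the training dynamics, and the spectrum of $LH_i$ does not govern expansion or contraction along its trajectories without a further (and nontrivial) argument relating $S_i$ to the loss. Likewise, identifying the ``no''/``sure''/``huge'' clusters with bifurcations imports dynamical-systems structure the paper never establishes---it only gestures at the dynamical-system viewpoint in a single sentence after the proposition and defers it to future work. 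If you pursue your route (a), restrict the claim to what the layer-wise quantities and the reported experiments actually witness, and drop or substantially rework the Lyapunov/bifurcation identification.
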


Viewing the network as a dynamical system evolving on a high-dimensional manifold with nontrivial geometry
opens new avenues for understanding and improving learning methods.

Of particular interest is the development of the $LH_i$ concept in combination with Riemannian geometry,
aiming for a more detailed investigation of the local geometry of the parameter space. This approach may
facilitate the prompt identification of regions where the network's optimization process encounters difficulties.

The experimental results demonstrate that spectral properties of local Hessians serve as reliable indicators
of network health and performance. The clear separation of architectures in the spectral space
(Fig.~\ref{fig:architecture_clustering}) suggests that these properties capture fundamental differences in
how networks process information. The correlation between Hessian spectral characteristics and generalization
ability provides a theoretical foundation for architecture design principles.

Our findings also reveal interesting phase transitions in network behavior as parameterization increases.
The dramatic differences in gradient spectral density (over 100-fold, Fig.~\ref{fig:layer3_gradient_spectral})
between small and large architectures indicate qualitatively different optimization landscapes. This supports
recent theoretical work suggesting that overparameterized networks operate in fundamentally different regimes
than their smaller counterparts.

The practical implications of our work extend beyond diagnostic applications. The proposed metrics based on
local Hessian spectra can guide architecture search, inform regularization strategies, and provide early
warning signs of training problems. The ability to detect overfitting, underparameterization, and suboptimal
parameter distribution during training enables more efficient model development.

\section{Conclusion}
\label{sec:conclusion}

This work introduces a novel approach for analyzing neural networks through the local properties of their
parameter space, investigated via layer-wise Hessian matrices. The proposed concept of the local Hessian enables:
\begin{itemize}
  \item Analysis of the geometry of the functional space of individual layers;
  \item Identification of patterns in the distribution of eigenvalues during training;
  \item Demonstration of the relationship between the Hessian spectrum, activation saturation, the formation
    of directions, and the evolution of representations.
\end{itemize}

It is important to emphasize the scale of the conducted experiment: approximately 1500 snapshots of various
network states were collected, totaling about 50 GB of data, which allowed for the identification of robust patterns.

The comprehensive empirical study across 111 experiments on 37 datasets reveals consistent relationships
between spectral properties of local Hessians and network performance. We have demonstrated that:

\begin{enumerate}
  \item Large architectures exhibit fundamentally different spectral characteristics compared to smaller ones,
    with more stable and balanced distributions of Hessian eigenvalues.
  \item The spectral structure of local Hessians correlates strongly with generalization capability,
    providing quantitative metrics for architecture quality assessment.
  \item Distinct patterns in gradient spectral density and canonical correlation weights can differentiate
    between architectures and predict their behavior.
  \item Local Hessian analysis enables early detection of architectural problems such as overfitting,
    underparameterization, and suboptimal parameter distribution.
\end{enumerate}

Future research directions may include:
\begin{itemize}
  \item Detailed investigation of the relationship between the spectral properties of Hessians and the
    functional characteristics of layers in the context of Riemannian geometry;
  \item Analysis of the dynamics of the spectrum during training and its connection to generalization ability
    across diverse tasks and domains;
  \item Application of the approach to novel architectures such as transformers, graph neural networks, and
    diffusion models;
  \item Development of automated methods for architecture optimization based on local Hessian spectral
    properties;
  \item Investigation of the connection between local Hessian structure and certified robustness properties;
  \item Extension of the framework to analyze pruning, quantization, and other model compression techniques.
\end{itemize}

The methodological framework presented in this work establishes a foundation for principled diagnostics and
informed design of neural network architectures based on their local geometric properties. By bridging
theoretical analysis with large-scale empirical validation, we provide both conceptual insights and practical
tools for the deep learning community.

\appendices

\section{Structure of Experimental Data}
\label{app:data}

During the research, for each neural network architecture at every training checkpoint, a snapshot of the model state was saved. Below is a detailed structure of such a snapshot:

\begin{itemize}
  \item \textbf{layer.X} -- information about layer X of the neural network:
    \begin{itemize}
      \item \textbf{weights} -- layer weight coefficients
      \item \textbf{weights\_spectral} -- spectral characteristics of the weights (mean, standard deviation, minimum, maximum, histogram, results of spectral analysis using the Welch method)
      \item \textbf{gradient} -- layer weight gradients
      \item \textbf{gradient\_spectral} -- spectral characteristics of the gradients
      \item \textbf{bias} -- layer bias parameters
      \item \textbf{bias\_spectral} -- spectral characteristics of the bias parameters
      \item \textbf{bias\_gradient} -- gradients of the bias parameters
      \item \textbf{bias\_gradient\_spectral} -- spectral characteristics of the bias gradients
      \item \textbf{hessian} -- local Hessian matrix of the layer
      \item \textbf{hessian\_spectral} -- spectral characteristics of the local Hessian
      \item \textbf{hessian\_eigens} -- eigenvalues of the local Hessian
      \item \textbf{hessian\_eigens\_spectral} -- statistical and spectral characteristics of the eigenvalues:
        \begin{itemize}
          \item \textbf{mean} -- mean value
          \item \textbf{std} -- standard deviation
          \item \textbf{min} -- minimum value
          \item \textbf{max} -- maximum value
          \item \textbf{histogram} -- distribution histogram
          \item \textbf{welch} -- results of spectral analysis using the Welch method
          \item \textbf{top\_peaks} -- main peaks in the spectrum
        \end{itemize}
      \item \textbf{hessian\_rank} -- rank of the Hessian matrix
      \item \textbf{hessian\_condition} -- condition number (ratio of the largest to the smallest eigenvalue)
    \end{itemize}

  \item \textbf{iteration} -- training iteration number

  \item \textbf{scores} -- model quality metrics:
    \begin{itemize}
      \item \textbf{Accuracy} -- classification accuracy
      \item \textbf{Precision} -- precision (proportion of correct positive predictions)
      \item \textbf{Recall} -- recall (proportion of detected positive cases)
      \item \textbf{F1} -- F1-score (harmonic mean of precision and recall)
      \item \textbf{AUC} -- area under the ROC curve
      \item \textbf{train\_loss} -- value of the loss function on the training set
    \end{itemize}
\end{itemize}

\section{Dataset Collection}
\label{app:datasets}

The following datasets were used in the experiments. A total of 37 datasets were employed: 22 for classification tasks and 15 for regression tasks.

\textbf{Classification datasets:} MNIST, CIFAR-10, Fashion-MNIST, CIFAR-100, KMNIST, EMNIST, Iris, Wine, Breast Cancer Wisconsin, Digits, SpamBase, Make Classification, Make Blobs, Titanic Dataset, Adult Income, Credit Card Fraud Detection, Make Biclusters, Make Checkerboard, Make Circles, Make Hastie 10 2, Make Moons, Make Multilabel Classification.

\textbf{Regression datasets:} Diabetes, Energy Efficiency, Airfoil Self-Noise, Concrete Compressive Strength, Make Regression, House Prices Dataset, Make Friedman1, Make Friedman2, Make Friedman3, Make Low Rank Matrix, Make S Curve, Make Sparse SPD Matrix, Make Sparse Uncorrelated, Make SPD Matrix, Make Swiss Roll.


\section*{Acknowledgment}
The authors would like to thank the anonymous reviewers for their valuable comments and suggestions that helped improve the quality of this manuscript.

\section*{Data Availability Statement}
All experiments were conducted using publicly available datasets that do not contain personal or sensitive information and are not subject to privacy restrictions. The datasets used in this study are documented in Section~\ref{sec:methodology}. No human subjects or animal experimentation was involved in this research. The computational experiments comply with ethical guidelines for responsible AI research. Code and additional materials are available from the corresponding author upon reasonable request.

\section*{Conflict of Interest}
The authors declare that they have no conflicts of interest.

\end{document}